\definecolor{orange}{rgb}{1,0.5,0}
\begin{document}

\title{HD3C: Efficient Medical Data Classification for Edge Devices}

\def\CICAISubNumber{165}  

\titlerunning{HD3C: Efficient Medical Data Classification for Edge Devices}
%

\author{
Jianglan Wei\inst{1,2,}* \and
Zhenyu Zhang\inst{1,}*\textsuperscript{,$\dagger$} \and
Pengcheng Wang\inst{2,}* \and
Mingjie Zeng\inst{1} \and
Zhigang Zeng \inst{1}
}
\authorrunning{J. Wei et al.}
%
\institute{Huazhong University of Science and Technology, Hubei, China \and
University of California, Berkeley, CA, USA}

\maketitle              
\setcounter{footnote}{0}

\vspace{-8pt}

{
\renewcommand{\thefootnote}{*}
\footnotetext{Equal contribution. $\dagger$ Corresponding author: \href{mailto:zzyzhang@hust.edu.cn}{zzyzhang@hust.edu.cn}.}
}

\begin{abstract}
Efficient medical data classification is essential for modern disease screening, particularly in resource-constrained environments where power budgets and computing capabilities are limited. We present HD3C~\footnote{Code: \href{https://github.com/jianglanwei/hd3c}{https://github.com/jianglanwei/hd3c}.}, a lightweight classification framework designed for edge devices. HD3C encodes data into high-dimensional hypervectors, aggregates them into multiple cluster prototypes, and performs classification through similarity search in hyperspace. We evaluate HD3C across three medical classification tasks; on heart sound classification, HD3C is $350\times$ more energy-efficient than Bayesian ResNet with less than 1\% difference in accuracy. Moreover, HD3C demonstrates exceptional robustness to noise, limited training data, and hardware error, supported by both theoretical analysis and empirical results, highlighting its potential for reliable deployment in real-world settings.

\keywords{Artificial intelligence in medicine \and efficient models}
\end{abstract}

\vspace{-12pt}

\section{Introduction}

\label{sec:introduction}

\begin{figure*}[t]
\centering
\includegraphics[width=1\linewidth]{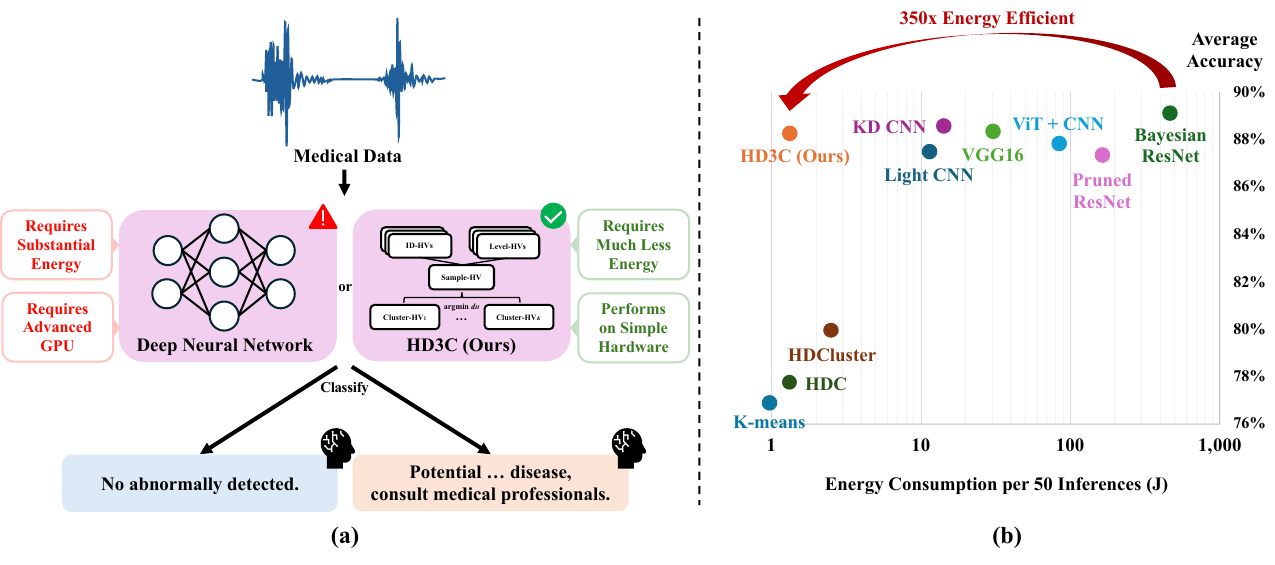}
\caption{\label{fig:teaser} (a) Automated disease screening using different medical data classifiers; (b) HD3C performance on heart sound classification. HD3C is 350$\times$ more energy-efficient than Bayesian ResNet and supports GPU-free inference, highlighting its potential for edge deployment.}
\end{figure*}

Automated medical data classifiers enhance the accessibility of disease screening by reducing its reliance on human doctors~\cite{tan_systematic_2024}. With the growing adoption of portable healthcare devices, disease screening is moving beyond clinical environments into home and field settings, where computational resources and power budgets are limited~\cite{rahman_resource-constrained_2025}. This highlights the need for classification models that can operate effectively on edge platforms~\cite{khan_efficient_2022}.

Deep learning models currently dominate medical classification tasks and achieve state-of-the-art accuracy, but their high energy consumption and reliance on GPUs limit deployment on low-power, near-sensor platforms~\cite{chen_deep_2021}. In contrast, an ideal medical classifier for edge devices should: (1) minimize energy consumption, (2) support GPU-free inference, and (3) process data locally to preserve patient privacy.

To address this, we present Hyperdimensional Computing with Class-Wise Clustering (HD3C), a lightweight framework designed to handle the high variability of real-world clinical data. It extends standard Hyperdimensional Computing (HDC) by introducing a specialized hyperspace clustering layer to better model unstructured data distributions. Specifically, HD3C encodes each sample into a high-dimensional sample hypervector (Sample-HV), aggregates them into a compact set of cluster prototypes (Cluster-HVs), and performs classification by selecting the Cluster-HV with the highest similarity.

Evaluated across three medical classification tasks, HD3C consumes far less energy than deep learning baselines while achieving substantial accuracy gains over standard HDC and other one-shot models. For instance, on the heart sounds classification task, HD3C is 350$\times$ more energy-efficient per inference than the state-of-the-art Bayesian ResNet, while providing $>$10\% accuracy improvement over standard HDC (Figure \ref{fig:teaser}). HD3C also demonstrates exceptional robustness: accuracy drops by only 1.39\% under 15\% input noise, 1.78\% with 40\% training data, and 2.84\% with 20\% parameter corruption (Figure \ref{fig:sensitivity_plot}). We provide theoretical analysis explaining this robustness, supporting HD3C's reliability for real-world deployment. Our main contributions are:

\begin{itemize}
\item[$\bullet$] We propose HD3C, a lightweight classification framework extended from Hyperdimensional Computing (HDC). It provides higher classification accuracy than standard HDC while preserving its energy efficiency. 

\item[$\bullet$] We demonstrate that HD3C consumes far less energy than deep learning baselines across three medical tasks. On heart sound classification, it is $350\times$ more energy-efficient than Bayesian ResNet while maintaining comparable accuracy.

\item[$\bullet$] We provide both theoretical and empirical evidence of HD3C's robustness to input noise and hardware error. To the best of our knowledge, our theorems on binary hypervectors, although some are widely recognized, have not been formally proven in prior HDC research.

\end{itemize}

\section{Hyperdimensional Computing (HDC)}

\label{sec:background}

Hyperdimensional Computing is a computational paradigm inspired by the information processing mechanisms of the brain \cite{pentti_kanerva_hyperdimensional_2009}. Compared with traditional computing that operates on raw numerical data, the human brain processes information via high-dimensional patterns of neural activity~\cite{masse_olfactory_2009}. HDC emulates this approach by projecting input data into high-dimensional representations to efficiently perform cognitive tasks~\cite{thomas_theoretical_2021}.

Specifically, HDC defines a set of elementary operations, like Binding (e.g., point-wise multiplication) and Bundling (e.g., point-wise addition with majority function), on a high-dimensional bipolar vector space $\mathcal{H}^D = \{ -1, +1\}^D$ called hyperspace~\cite{pentti_kanerva_hyperdimensional_2009}. 
The space is equipped with a distance measure called Hamming distance $d_H: \mathcal{H}^D \times \mathcal{H}^D \rightarrow [0, 1]$, which is defined as the ratio of the different bits between two hypervectors (i.e., hyperspace vectors). Please refer to Appendix~\ref{Appendix: Theory} for the detailed notations and definitions.

HDC exhibits several ideal properties when its dimensionality $D$ is very large. For example, the Hamming distance of two hypervectors remains unchanged after bound to the same hypervector (Lemma \ref{lemma: Hamming Distance Preservation under Multiplication}); or when randomly selecting two hypervectors, their Hamming distance is almost always around 0.5 (Lemma \ref{lemma: Hamming distance between two Random Hypervectors}). Based on these properties, HDC can encode a low-dimensional input $s \in \mathbb{R}^d (d \ll D)$ from a continuous feature space into the high-dimensional hyperspace. 

HDC has increasingly demonstrated its efficacy across diverse classification tasks, including image recognition~\cite{billmeyer_biological_2021}, language identification~\cite{alonso_hyperembed_2021}, and acoustic signal processing~\cite{imani_voicehd_2017}. Conventional HDC classification pipelines typically rely on a single hypervector as a centroid prototype to represent the distribution of an entire class, enabling the framework to operate with drastically lower energy and storage than deep neural networks. However, when a single class contains highly heterogeneous samples, such that intra-class variance exceeds inter-class differences, this single-prototype approach becomes brittle. While prior works have explored hyperspace clustering for unsupervised grouping on dispersed data distributions~\cite{imani_hdcluster_2019}, to the best of our knowledge, no existing literature has attempted using hyperspace clustering to enhance a supervised HDC classification framework. 

\section{Medical Data Classification through HD3C}

\label{sec:method}

In this section, we present Hyperdimensional Computing with Class-Wise Clustering (HD3C), a lightweight framework for efficient medical data classification. Algorithm \ref{alg:overall_pipeline} and Figure~\ref{fig:pipeline-overview} illustrate the overall HD3C pipeline. Specifically, HD3C extends standard HDC by introducing a specialized hyperspace clustering layer that can better model the dispersed distributions of real-world clinical datasets (Section~\ref{subsec:method.2}). To maximize performance, an optional retraining stage can be deployed to further enhance accuracy (Section~\ref{subsec:method.3}). Formal definitions, lemmas, and theorems supporting this framework are provided in Appendix~\ref{Appendix: Theory}.

\begin{figure}[t]
\centering
\includegraphics[width=1\linewidth]{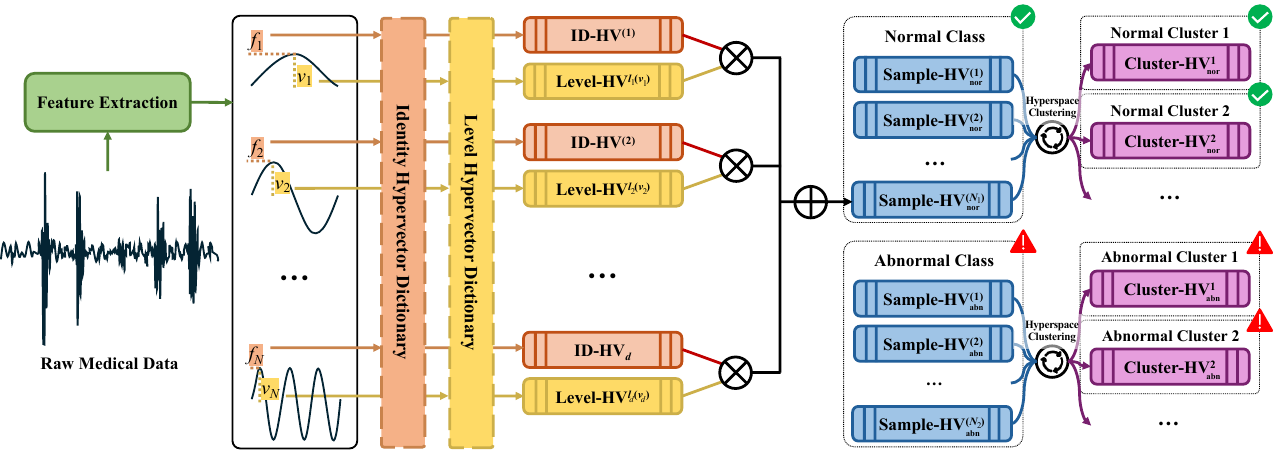}
\caption{\label{fig:pipeline-overview}Medical data classification through HD3C: Training samples are encoded into sample hypervectors (Sample-HVs) and aggregated into a compact set of cluster prototypes (Cluster-HVs); new samples are classified by selecting the Cluster-HV with highest similarity. The figure illustrates a binary classification example, though HD3C is not limited to binary tasks.}
\end{figure}

\subsection{Encode Sample into Hypervector}
\label{subsec:method.1}

HD3C requires numeric features as input. For heart sound classification~\cite{clifford_classification_2016}, we extract $d = 720$ features using Mel-frequency Cepstral Coefficients (MFCC)~\cite{davis_comparison_1980} and Discrete Wavelet Transform (DWT)~\cite{mallat_theory_1989}, two widely used frequency-domain representations in audio analysis. For breast cancer classification~\cite{william_wolberg_breast_1995}, we use $d = 30$ features from fine-needle aspirate (FNA) breast mass images, and for EMG classification, we use $d = 8$ features. Datasets are detailed in Section~\ref{subsec:results.1}.

Each sample's feature vector $s \in \mathbb{R}^d$ is encoded into a high-dimensional binary hypervector called Sample Hypervector (Sample-HV): $S \in \mathcal{H}^D = \{ -1, +1\}^D$, $d \ll D$. The encoder is designed to map similar samples to similar Sample-HVs. The first step of encoding divides each feature's value range into $M$ intervals: the middle $96\%$ of values are split into $M$ equal-width intervals, while the top and bottom $2\%$ are directly mapped to the first and last intervals (Figure \ref{fig:value_mapping}). This is formalized by function $l$ in Definition~\ref{definition: Mapping with Levels}. 

\begin{wrapfigure}{htb}{0.5\linewidth}
    \centering
    \includegraphics[width=\linewidth]{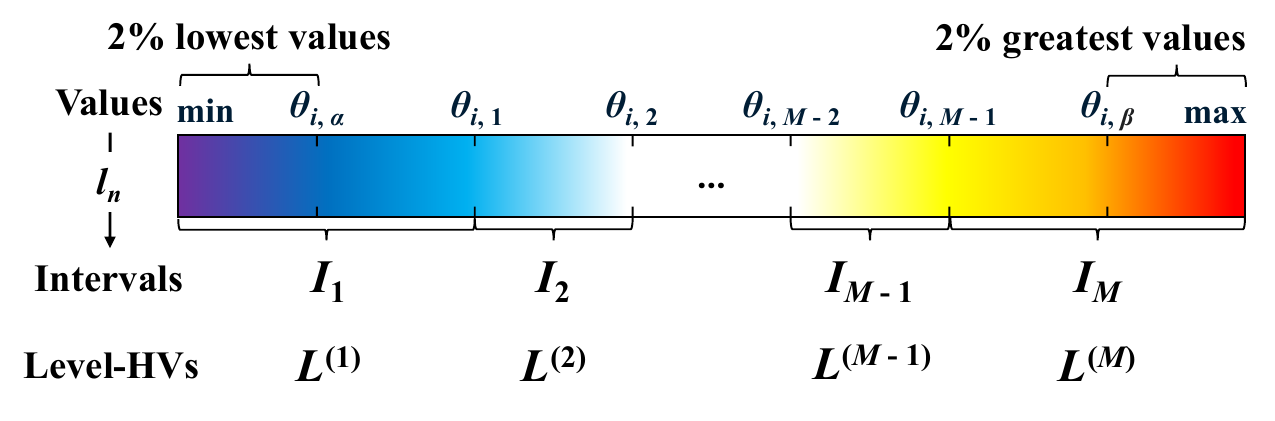}
    \caption{\label{fig:value_mapping}Divide feature $n$'s value range into $M$ intervals.}
\end{wrapfigure}

Each interval in Figure~\ref{fig:value_mapping} is represented by a predefined Level Hypervector (Level-HV), denoted as $L^{(m)} \in \mathcal{H}^D$, $m \in \{1, 2, \ldots, M\}$. The Level-HVs are generated in a way that neighboring Level-HVs have a low Hamming distance (i.e. low percentage of different bits): starting from a randomly generated $L^{(1)}$, each subsequent Level-HV is formed by randomly flipping $D /\ (M - 1)$ bits from the previous vector, with each bit flipped only once across the sequence (Definition~\ref{definition: Level Set}). This ensures that the Hamming distance between any two Level-HVs satisfies:

\begin{equation}
d_H(L^{(i)}, L^{(j)}) = \frac{|i-j|}{M-1}.
\end{equation}

The Level-HVs are shared across all features. To distinguish identical Level-HVs under different features, each feature is assigned a randomly sampled Identity Hypervector (ID-HV), denoted as $ID^{(n)} \in \mathcal{H}^D$, $n \in \{1, 2, \ldots, d\}$. A value in interval $I_m$ under feature $n$ would be represented as $ID^{(n)} \otimes L^{(m)}$, where $\otimes$ denotes the binding operation implemented by point-wise multiplication (Definition~\ref{definition: Elementary Functions}). Since ID-HVs are randomly sampled, their pairwise Hamming distance is approximately 0.5 (Lemma~\ref{lemma: Hamming distance between two Random Hypervectors}), ensuring feature-wise independence: neighboring Level-HVs (representing similar feature values) under the same feature (bound to the same ID-HV) remain similar after binding, whereas Level-HVs bound to different ID-HVs become unrelated (Lemma~\ref{lemma: Hamming Distance Preservation under Multiplication}).

Finally, with the predefined Level-HVs and ID-HVs, the encoder maps each medical sample to a Sample-HV by bundling the representations of each feature:

\begin{equation}
\label{eqn: mapping}
    S^{(i)} = [\sum_{n=1}^d ID^{(n)} \otimes L^{(l_n(s^{(i)}_n))} ],
\end{equation}

\noindent where $s^{(i)}_n$ denotes the value of the $n^{th}$ feature in sample $s^{(i)}$, and $[\cdot]$ is the element-wise majority function: it outputs $+1$ for positive sum, $-1$ for negative sum, and equally random samples from $\{-1, +1\}$ for zero sum (Definition~\ref{definition: Elementary Functions}).

We demonstrate the inherent robustness of this encoding to input noise through Theorem \ref{thm: robust_to_input_noise}. When noise is applied to a feature vector $s^{(1)}$, producing a perturbed version $s^{(2)}$, the theorem shows that as long as the noise is bounded by a relative ratio $\delta$, the Hamming distance between the corresponding Sample-HVs $S^{(1)}$ and $S^{(2)}$ has an upper bound. This upper bound, expressed as $g(\delta)$, remains quantitatively small even under moderate noise. The formal proof and definition of $g(\delta)$ are provided in Proof~\ref{prf:robust_to_input_noise}.

\begin{theorem}[Robustness to Input Noise]
\label{thm: robust_to_input_noise}
Let $s^{(1)}$ be a feature vector and $s^{(2)}$ its noisy variant, with their Sample-HVs denoted as $S^{(1)}, S^{(2)} \in \{-1, 1\}^D$ according to Equation~\ref{eqn: mapping}.
Suppose that for feature dimensions $n \in \{1, \dots, d\}$,
\begin{equation}
\frac{\left| s_n^{(1)} - s_n^{(2)} \right|}{\Delta_n} \leq \delta,
\end{equation}
where $\Delta_n$ is the range of the $n^{\mathrm{th}}$ feature value, and $\delta \in [0, 1]$ denotes the maximum normalized perturbation.
Then, with a sufficiently large $D$, the expected upper-bound of the Hamming distance between $S^{(1)}$ and $S^{(2)}$ converges to a monotonically increasing function $g(\delta)$:
\begin{equation}
    \lim_{D \to \infty} \mathbb{E}\left[\sup d_H\left( S^{(1)}, S^{(2)} \right) \right] = g(\delta).
\end{equation}
\end{theorem}

\subsection{Class-Wise Hyperspace Clustering}
\label{subsec:method.2}

Medical data often exhibit substantial intra-class variability beyond simple class labels. For example, heart sounds sharing the same ‘abnormal’ label can differ due to the type and stage of cardiac disease, the stethoscope used, and the recording site on the body~\cite{zipes_braunwalds_2019}. Such heterogeneity challenges the standard HDC pipeline, which rely on a single prototype to represent all samples within one class.

To address this, HD3C introduces class-wise hyperspace clustering. The clustering process is inspired by K-means, but performed in hyperspace on Sample-HVs and applied independently within each class. Unlike K-means which computes arithmetic means, HD3C uses the bundling operation (Appendix A.1 Definition 2) to form cluster prototypes (Cluster-HVs). Leveraging hypervectors rather than raw numeric features provides the robustness formalized in Theorem \ref{thm: robust_to_input_noise}. 

Specifically, for each class $j$, clustering begins by randomly assigning its Sample-HVs $\{S_j^{(i)}\}$ to $K$ clusters $\{\mathfrak{C}_j^k\}^K_{k=1}$. Each Cluster-HV $C_j^k \in \mathcal{H}^D$ is computed by bundling the Sample-HVs in that cluster:

\begin{equation}
\label{eqn: clustering}
C^k_j = [\sum_{S_j^{(i)} \in \mathfrak{C}^k_j} S_j^{(i)}].
\end{equation}

\noindent Next, the Sample-HVs are reassigned to the cluster whose prototype has the lowest Hamming distance:

\begin{equation}
S_j^{(i)} \rightarrow \operatorname*{argmin}_{k \in [K]}d_H(S_j^{(i)}, C_j^k), \qquad [K] \coloneqq \{1, 2, \dots, K\}.
\end{equation}

This bundling and reassigning process is repeated for $T$ iterations until convergence, yielding the final set of Cluster-HVs $\{{C}_j^k\}^K_{k=1}$ for class $j$. After clustering all $J$ classes, HD3C may classify an unseen new sample by encoding it as Sample-HV $S^\mathrm{new}$ and assigning it to the Cluster-HV with the highest similarity (i.e., lowest Hamming distance):

\begin{equation}
S^\mathrm{new} \rightarrow \operatorname*{argmin}_{(j, k) \in [J] \times [K]} d_H(S^\mathrm{new}, C_j^k).
\end{equation}

\noindent The sample is then classified according to the class label of the selected cluster.

Theorem~\ref{thm: proximity_guarantee} provides theoretical insight into our clustering-by-bundling method in Equation \ref{eqn: clustering} (Proof~\ref{prf:proximity_guarantee}). It shows that the Hamming distance between the Cluster-HV $C$ and any Sample-HV that constitutes the cluster is almost always less than the typical distance of $d_H=0.5$ between two random hypervectors. This supports that the Cluster-HV can sufficiently represent all samples in the cluster. Moreover, since Sample-HVs within a cluster are typically more similar to one another than two randomly drawn hypervectors, the Cluster-HV is expected to preserve even stronger relationships in practice than those guaranteed under the random hypervector assumption in the theorem.

\begin{theorem}[Distance Between Cluster Prototype and Constituents]
\label{thm: proximity_guarantee}
Let $S^{(1)}, S^{(2)}, \dots, S^{(N)} \in \mathcal{H}^D$ be independently sampled random hypervectors. Define their bundling sum as $C = [S^{(1)} + S^{(2)} + \dots + S^{(N)}]$. 
As $D \to \infty$, for any random hypervector $S^* \in \mathcal{H}^D$, index $j \in \{1, \dots, N\}$, the Hamming distance between $C$ and any component $S^{(n)}$ satisfies 

\begin{equation}
P\left(d_H(C, S^{(n)}) < d_H(C, S^*)\right) \rightarrow 1.
\end{equation}

\end{theorem}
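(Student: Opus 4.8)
The plan is to reduce the whole statement to the behaviour of a single coordinate and then invoke the law of large numbers. Since the majority operation $[\cdot]$ acts coordinate-wise and all coordinates of all the $S^{(m)}$ (and of $S^*$) are i.i.d.\ uniform on $\{-1,+1\}$, the random variable $C_i=[\sum_{m=1}^N S^{(m)}_i]$ depends only on the $i$-th coordinates; hence the indicators $\mathbf{1}\{C_i\neq S^{(n)}_i\}$, $i=1,\dots,D$, are i.i.d., and likewise the $\mathbf{1}\{C_i\neq S^*_i\}$ are i.i.d. Writing
\begin{equation}
d_H(C,S^{(n)})=\frac1D\sum_{i=1}^D\mathbf{1}\{C_i\neq S^{(n)}_i\},\qquad d_H(C,S^*)=\frac1D\sum_{i=1}^D\mathbf{1}\{C_i\neq S^*_i\},
\end{equation}
each Hamming distance is an average of i.i.d.\ bounded variables, so by the weak law of large numbers they converge in probability as $D\to\infty$ to $p:=P(C_1\neq S^{(n)}_1)$ and $q:=P(C_1\neq S^*_1)$, respectively. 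It then suffices to establish the strict inequality $p<q$: taking $\varepsilon=(q-p)/3>0$, with probability tending to $1$ both distances lie within $\varepsilon$ of their limits, which forces $d_H(C,S^{(n)})<d_H(C,S^*)$.

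The value $q=\tfrac12$ is immediate: $S^*$ is a fresh random hypervector independent of $C$, so $S^*_1$ is uniform on $\{-1,+1\}$ and independent of $C_1$; this is exactly Lemma~\ref{lemma: Hamming distance between two Random Hypervectors}. (Here one uses the implicit hypothesis that $S^*$ is drawn independently of the $S^{(m)}$; without it the claim fails, e.g.\ for $S^*=S^{(n)}$, and I would state this assumption explicitly.)

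The crux is the bound $p=P\big([\sum_{m=1}^N X_m]\neq X_n\big)<\tfrac12$, where $X_1,\dots,X_N$ are i.i.d.\ uniform on $\{-1,+1\}$ and $[\cdot]$ is the randomly tie-broken sign. Rather than splitting on the parity of $N$, I would use a symmetry identity. Put $Z:=[\sum_{m=1}^N X_m]\in\{-1,+1\}$; since $ZX_n$ equals $+1$ with probability $1-p$ and $-1$ with probability $p$, one has $\mathbb{E}[ZX_n]=1-2p$. Now $Z\cdot\sum_{m=1}^N X_m=\big|\sum_{m=1}^N X_m\big|$ holds pointwise — true even on the tie event, where both sides vanish regardless of the coin flip — so summing over $n$ gives $\sum_{n=1}^N\mathbb{E}[ZX_n]=\mathbb{E}\big[\big|\sum_{m=1}^N X_m\big|\big]>0$ for $N\ge 1$. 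By exchangeability of the $X_m$ every summand is equal, whence $\mathbb{E}[ZX_n]=\tfrac1N\,\mathbb{E}\big[\big|\sum_{m=1}^N X_m\big|\big]>0$ and therefore $p=\tfrac12-\tfrac1{2N}\,\mathbb{E}\big[\big|\sum_{m=1}^N X_m\big|\big]<\tfrac12$. (As a bonus this pins down the exact limit of $d_H(C,S^{(n)})$.)

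Combining $p<\tfrac12=q$ with the gap argument of the first paragraph then completes the proof. The only real obstacle I anticipate is spotting the identity $\sum_n\mathbb{E}[ZX_n]=\mathbb{E}\big|\sum_m X_m\big|$; the alternative route — computing $P(Z\neq X_n\mid X_n=1)=P\big(\sum_{m\neq n}X_m<-1\big)+\tfrac12 P\big(\sum_{m\neq n}X_m=-1\big)$ and exploiting the symmetry of the simple random walk $\sum_{m\neq n}X_m$ about $0$ — also works, but is messier since it requires separating $N$ even from $N$ odd and carefully bookkeeping the tie probability. A secondary, minor point is to make the independence of $S^*$ from the $S^{(m)}$ explicit in the hypotheses.
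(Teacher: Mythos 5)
Your proof is correct, and the overall skeleton (reduce to a single coordinate, note the indicators are i.i.d.\ across coordinates, concentrate both Hamming distances around their means, and separate the two means) matches the paper's. Where you genuinely diverge is in how the key inequality $p = P(C_1 \neq S^{(n)}_1) < \tfrac12$ is obtained. The paper computes $p$ \emph{exactly}: it conditions on $S^{(n)}_1$, observes that the remaining sum $Z^{(n)}_1 = \sum_{k\neq n} S^{(k)}_1$ is a shifted binomial, restricts to the case $N-1$ even to dodge the tie-breaking bookkeeping, and arrives at $p = \tfrac12 - 2^{-N}\binom{N-1}{(N-1)/2}$; it then invokes a normal approximation and explicit Gaussian tail bounds to separate the two distances. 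Your symmetry identity $Z\cdot\sum_m X_m = \big|\sum_m X_m\big|$ combined with exchangeability gives $p = \tfrac12 - \tfrac{1}{2N}\mathbb{E}\big|\sum_m X_m\big|$ in one line, valid for every $N$ with no parity split and with the tie event handled automatically (both sides vanish there). This is cleaner and, via $\mathbb{E}|\mathrm{Binomial}|$ asymptotics, recovers the same $\Theta(N^{-1/2})$ gap the paper gets from $2^{-N}\binom{N-1}{(N-1)/2}$. Your concluding step via the weak law of large numbers and a fixed $\varepsilon=(q-p)/3$ is also simpler and fully rigorous for fixed $N$ as $D\to\infty$, whereas the paper's CLT-based tail estimate is heavier machinery for the same conclusion (and its closing remark about needing $N \gg D$ appears to be a slip --- the correct reading, which your argument makes transparent, is that $D$ must be large relative to $1/p(N)^2$). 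Your two flagged assumptions --- that $S^*$ is independent of the $S^{(m)}$, and that the majority tie-breaks are independent across coordinates --- are indeed implicit in the paper and worth stating; the theorem is false without the first, as your $S^*=S^{(n)}$ example shows.
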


\begin{algorithm*}[t]
    \caption{The HD3C Framework (Without Retraining)}
    \label{alg:overall_pipeline}
    \begin{algorithmic}[1]
        \STATE $\{ID^{(n)}\}_{n=1}^{d}\leftarrow \mathrm{identity\_hypervectors}(d)$ \hfill $\triangleright$ Generate ID-HV Dictionary.
        \STATE $\{L^{(m)}\}_{m=1}^{M} \leftarrow \mathrm{level\_hypervectors}(M)$ \hfill $\triangleright$ Generate Level-HV Dictionary.
        \FOR{$s^{(i)} \in \mathcal{S}$ \textbf{in} training set}
            \STATE $S^{(i)} \leftarrow [\sum_{n=1}^d ID^{(n)} \otimes L^{(l_n(s_n^{(i)}))}) ]$ \hfill $\triangleright$ Encode sample $s^{(i)}$ into Sample-HV $S^{(i)}$.
        \ENDFOR
        \FOR{$j = 1$ \textbf{to} number of classes $J$}
            \FOR{$S_j^{(i)} \in$ class $j$}
                \STATE $\mathrm{init\_cluster\_idx} \leftarrow \mathrm{randint}(K)$ \hfill $\triangleright$ Random assign $S_j^{(i)}$ to one of $K$ clusters.
                \STATE $\mathfrak{C}_j^{\mathrm{init\_cluster\_idx}}\leftarrow \mathfrak{C}_j^{\mathrm{init\_cluster\_idx}} \cup \{S_j^{(i)}\}$
            \ENDFOR
            \FOR{$t = 1$ \textbf{to} number of clustering iterations $T$}
                \FOR{$k = 1$ \textbf{to} number of clusters $K$}
                    \STATE $C_j^{k} \leftarrow  [\sum_{S_j^{(i)} \in \mathfrak{C}_j^k} S_j^{(i)}]$ \hfill $\triangleright$ Generate Cluster-HV $C^k_j$ to represent cluster $\mathfrak{C}_j^k$.
                \ENDFOR
                \FOR{$S_j^{(i)} \in$ class $j$}
                    \STATE $\mathrm{cluster\_idx} \leftarrow \operatorname*{argmin}_{k=1}^{K}d_H(S_j^{(i)}, C_j^k$) \hfill $\triangleright$ Reassign $S_j^{(i)}$ to closest cluster.
                    \STATE $\mathfrak{C}_j^{\mathrm{cluster\_idx}} \leftarrow \mathfrak{C}_j^{\mathrm{cluster\_idx}}  \cup \{S_j^{(i)}\}$
                \ENDFOR
            \ENDFOR
        \ENDFOR
        \STATE $(\mathrm{class\_idx}, \mathrm{cluster\_idx}) \leftarrow \operatorname*{argmin}_{(j, k) \in [J] \times [K]} d_H(S_j^{\mathrm{new}}, C_j^k)$ \hfill $\triangleright$ Classify new sample.
        \RETURN $\mathrm{class\_idx}$
    \end{algorithmic}
\end{algorithm*}

\subsection{Retrain Cluster Prototypes}
\label{subsec:method.3}

To further improve accuracy, HD3C optionally applies a retraining procedure that adjusts Cluster-HVs based on misclassified training samples. The retraining stage operates exclusively on the training set with no reference to test data.

Recall that each Cluster-HV $C_j^k$ is generated by bundling the Sample-HVs assigned to cluster $\mathfrak{C}_j^k$, i.e., $C_j^k = [\sum_{S_j^{(i)} \in \mathfrak{C}^k_j} S_j^{(i)}]$. Let $S^\mathrm{err}$ be a misclassified Sample-HV from the training set. Suppose $S^\mathrm{err} \in \mathfrak{C}_{j_1}^{k_1}$, but its nearest Cluster-HV in Hamming distance is a different Cluster-HV $C_{j_2}^{k_2}$. This implies that the Cluster-HV $C_{j_1}^{k_1}$, constructed from cluster $\mathfrak{C}_{j_1}^{k_1}$, does not adequately represent $S^\mathrm{err}$.

To correct this, HD3C performs two adjustments: (1) Subtracts $S^\mathrm{err}$ from the incorrect Cluster-HV $C_{j_2}^{k_2}$ and (2) Re-bundles $S^\mathrm{err}$ onto the correct Cluster-HV $C_{j_1}^{k_1}$. 
We generalize this operation across all misclassified training samples. Let $\mathcal{E}_{j}^\mathrm{k, out}$ denote Sample-HVs outside cluster $\mathfrak{C}_j^k$ that incorrectly match closest to $C_j^k$; while $\mathcal{E}_{j}^\mathrm{k, in}$ denote the Sample-HVs within $\mathfrak{C}^j_k$ but are closer to a Cluster-HV of a different cluster. The retrained Cluster-HV representing cluster $\mathfrak{C}^j_k$ is computed as

\begin{equation}
C_j^{k^\prime} = [\sum_{S_j^{(i)} \in \mathfrak{C}_j^k} S_j^{(i)} - \sum_{S_j^{(i)} \in \mathcal{E}_{j}^\mathrm{k, out}} S_j^{(i)} + \sum_{S_j^{(i)} \in \mathcal{E}_{j}^\mathrm{k, in}} S_j^{(i)}].
\end{equation}

\section{Results and Discussions}

\label{sec:results}

\subsection{Datasets}

\label{subsec:results.1}

\noindent \textbf{PhysioNet/CinC Challenge 2016}~\cite{clifford_classification_2016} provides a collection of 3,153 heart sound recordings sourced from six distinct databases. These recordings were collected by multiple research teams across various countries using different equipment and methodologies, often under noisy conditions. Each recording is labeled as either ‘normal’ or ‘abnormal’.

\vspace{1em}

\noindent \textbf{Wisconsin Breast Cancer}~\cite{william_wolberg_breast_1995} is a widely used benchmark for breast cancer diagnosis. It contains real-valued features extracted from digitized images of fine-needle aspirates (FNA) of breast masses, with each sample labeled as either ‘benign’ or ‘malignant’.

\vspace{1em}

\noindent \textbf{sEMG Muscle Fatigue} provides surface electromyography (sEMG) recordings at a 2000Hz sampling rate. Each sample is annotated into three fatigue levels: relax, mild fatigue, and fatigue.

\subsection{Comparison with Baseline Models}

Table~\ref{tab:performance_compare} summarizes the performance of HD3C against baseline models across the three medical datasets. The energy listed in the table was computed by integrating measured power over time. HD3C consumes far less energy than deep neural networks while outperforming one-shot models in terms of accuracy. For instance, on the PhysioNet heart sounds classification task, HD3C is $580\times$ more energy-efficient in training and $350\times$ in inference compared to the best-performing Bayesian ResNet~\cite{h_krones_dual_2022}, while also delivering a 10\% accuracy improvement over standard HDC.

\begin{table*}[t]
\centering
\caption{Performance and Energy Comparison of HD3C and Baseline Models}
\label{tab:performance_compare}
\resizebox{\textwidth}{!}{%
\begin{tabular}{lrrrr}
\toprule
\textbf{Models} & \textbf{10-Folds Accuracy} (\%) & \textbf{Energy (Train, J)} & \textbf{Energy (1k Inferences, J)}\\
\midrule
\textbf{PhysioNet Challenge 2016:} \\
Bayesian ResNet \cite{h_krones_dual_2022} & 89.105 ± 1.543 & 142997 ± \: 5465 & 9455 ± 899\\
Knowledge Distillation CNN \cite{song_cutting_2023} & 88.580 ± 2.186 & 32808 ± \: 2582 & 289 ± 116\\
VGG16 \cite{shuvo_nrc-net_2023} & 88.271 ± 1.718 & 165840 ± \: 5443 & 605 ± 131\\
ViT + CNN \cite{han_enact-heart_2025} & 87.808 ± 1.996 & 889920 ± 36748 & 1661 ± 209\\
Light CNN \cite{li_lightweight_2021} & 87.408 ± 1.497 & 10164 ± \: 1204 & 227 ± \: 88\\
Pruned Bayesian ResNet \cite{h_krones_dual_2022} & 87.190 ± 2.508 & 142997 ± \: 5465 & 3307 ± 214\\
Class-Wise K-Means & 76.759 ± 4.937 & 9 ± \qquad 2 & \textit{very low}\\
HDC & 77.840 ± 2.419  & 135 ± \:\;\quad 10 & 26 ± \:\:\: 4\\
\textcolor{orange}{\textbf{HD3C (Ours)}} & 88.180 ± 1.746 & 246 ± \qquad 6 & 27 ± \:\:\: 3 \\
\midrule
\textbf{Wisconsin Breast Cancer:} \\
DNN \cite{zheng_training_2024} & 96.842 ± 2.579 & 698.626 ± \: 82.403 & 11.842 ± 1.470 \\
GRU \cite{jony_deep_2024} & 96.668 ± 1.994 & 2710.461 ± 109.069 & 97.283 ± 2.408 \\
CNN \cite{jony_deep_2024} & 95.439 ± 2.956 & 312.416 ± \:\:\: 7.874 & 12.458 ± 0.368 \\
Class-Wise K-Means & 93.595 ± 3.106 & 3.045 ± \:\:\: 0.379 & 0.070 ± 0.007 \\
HDC & 94.382 ± 0.039  & 2.489 ± \:\:\:  0.174 & 0.717 ± 0.021\\
\textcolor{orange}{\textbf{HD3C (Ours)}} & 96.314 ± 0.027 & 3.801 ± \:\:\: 0.340 & 0.823 ± 0.015 \\
\midrule
\textbf{sEMG Muscle Fatigue: }\\
GRU \cite{aviles_optimizing_2024} & 91.955 ± 3.084 & 874.511 ± 42.378 & 34.018 ± 1.013 \\
LSTM \cite{aviles_optimizing_2024} & 91.710 ± 2.705 & 135.828 ± \: 7.453 & 32.416 ± 1.483 \\
CNN \cite{moniri_real-time_2021} & 91.367 ± 3.626 & 250.685 ± 21.217 & 10.426 ± 0.884 \\
Class-Wise K-Means & 88.596 ± 5.135 & 3.367 ± \: 0.247 & 0.076 ± 0.010 \\
HDC \cite{moin_emg_2018} & 84.984 ± 0.043 & 2.634 ± \: 0.169 & 0.445 ± 0.025 \\
\textcolor{orange}{\textbf{HD3C (Ours)}} & 91.592 ± 2.927 & 2.813 ± \: 0.205 & 0.422 ± 0.029 \\
\bottomrule

\end{tabular}%
}
\end{table*}

\subsection{Impact of HD3C Hyperparameters}

As discussed in Section~\ref{sec:method}, a sufficiently large dimensionality $D$ is critical for HD3C to maintain robustness against input noise (Theorem~\ref{thm: robust_to_input_noise}) and to ensure that Cluster-HVs accurately capture the aggregate features of their clusters (Theorem~\ref{thm: proximity_guarantee}). Likewise, an adequate number of clusters $K$ are essential for forming stable hyperspace clusters. A few retrain epochs can further fine-tune Cluster-HVs to better align with the Sample-HVs assigned to each cluster.

However, excessively large values of these hyperparameters introduce unnecessary computational overhead, reducing the energy efficiency HD3C is designed to achieve. Our experiments also indicate that allocating too many clusters or performing excessive retraining can lead to overfitting. Figure~\ref{fig:sensitivity_plot} (a-c) illustrates HD3C's performance on heart sound classification across different hyperparameter settings.

\subsection{Robustness for Real-World Deployment}

\begin{figure*}[t]
\centering
\includegraphics[width=\linewidth]{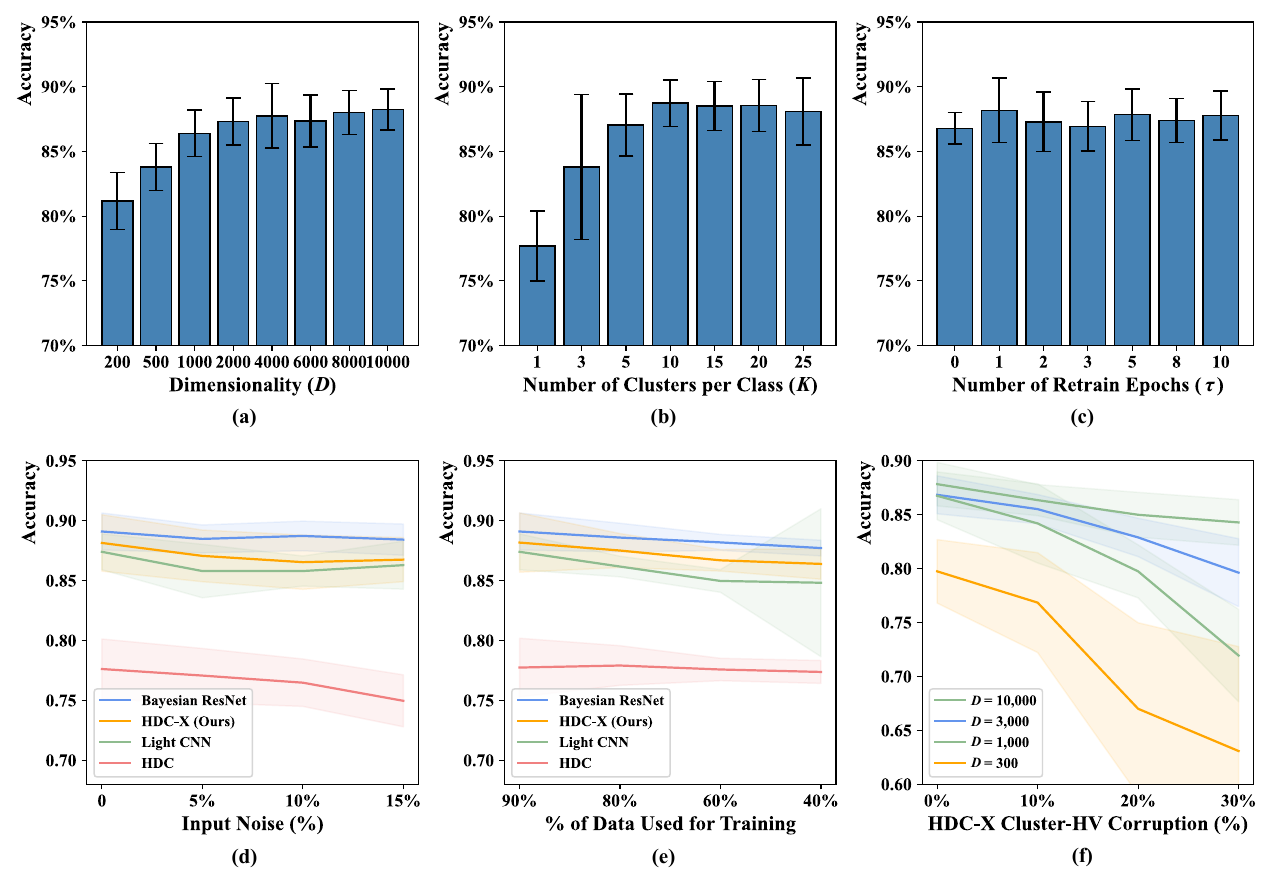}
\caption{\label{fig:sensitivity_plot} HD3C sensitivity to hyperparameters, input noise, limited training data, and hardware errors on PhysioNet 2016.}
\end{figure*}

\noindent \textbf{Resilience to Input Noise.} Real-world medical signals often contain persistent environmental noise, which can impair classification performance~\cite{clifford_classification_2016}. The inherent robustness of HD3C's encoding mechanism, as supported by Theorem~\ref{thm: robust_to_input_noise}, allows it to maintain high accuracy even with noisy inputs. As shown in Figure~\ref{fig:sensitivity_plot}(d), HD3C experiences only 1.39\% drop in accuracy under 15\% input noise, demonstrating its reliability in noisy settings.

\vspace{1em}

\noindent \textbf{Resilience to Limited Training Data.} Many medical datasets are limited in size due to privacy constraints. As shown in Figure~\ref{fig:sensitivity_plot}(e), HD3C remains robust under reduced training data, with only a 1.78\% drop in accuracy when trained on 40\% of the PhysioNet 2016 dataset.

\vspace{1em}

\noindent \textbf{Resilience to Hardware Errors.} Neural architectures are known for their fault tolerance through redundant representations, unlike traditional binary systems where single-bit failures are critical~\cite{pentti_kanerva_hyperdimensional_2009}. Similarly, brain-inspired hyperdimensional encodings offer inherent robustness to hardware malfunctions by distributing information across high-dimensional vectors~\cite{kanerva_random_2000}. 

To evaluate this robustness in HD3C, we conducted a perturbation analysis by randomly flipping elements (+1 to -1 and vice versa) in all stored Cluster-HVs to simulate hardware instability. Theorem~\ref{thm: robust_to_hardware_error} demonstrates that, with sufficient dimensionality, flipping up to 50\% of the elements has minimal impact on classification accuracy (Proof~\ref{prf:robust_to_hardware_error}). The theorem is supported empirically: as shown in Figure \ref{fig:sensitivity_plot}(f), with $D = 10{,}000$, flipping 20\% of elements results in only a 2.84\% drop in accuracy.

\begin{theorem}[Robustness to Hardware Error]
\label{thm: robust_to_hardware_error}
Assume we have a sample hypervector $S$ and two cluster hypervectors $C_1$ and $C_2$, whose initial Hamming distances satisfy:
\[
d_H(S, C_1) - d_H(S, C_2) = \epsilon > 0.
\]

We randomly flip a proportion $p$ ($p < 0.5$) of the bits in both $C_1$ and $C_2$, yielding two new hypervectors $C'_1$ and $C'_2$. As $D \rightarrow \infty$, we have

\[
P\big( d_H(S, C'_1) < d_H(S, C'_2) \big) \to 1.
\]
\end{theorem}

\subsection{Conceptual Hardware Framework}

\begin{wrapfigure}{h}{0.56\linewidth}
    \centering
    \includegraphics[width=\linewidth]{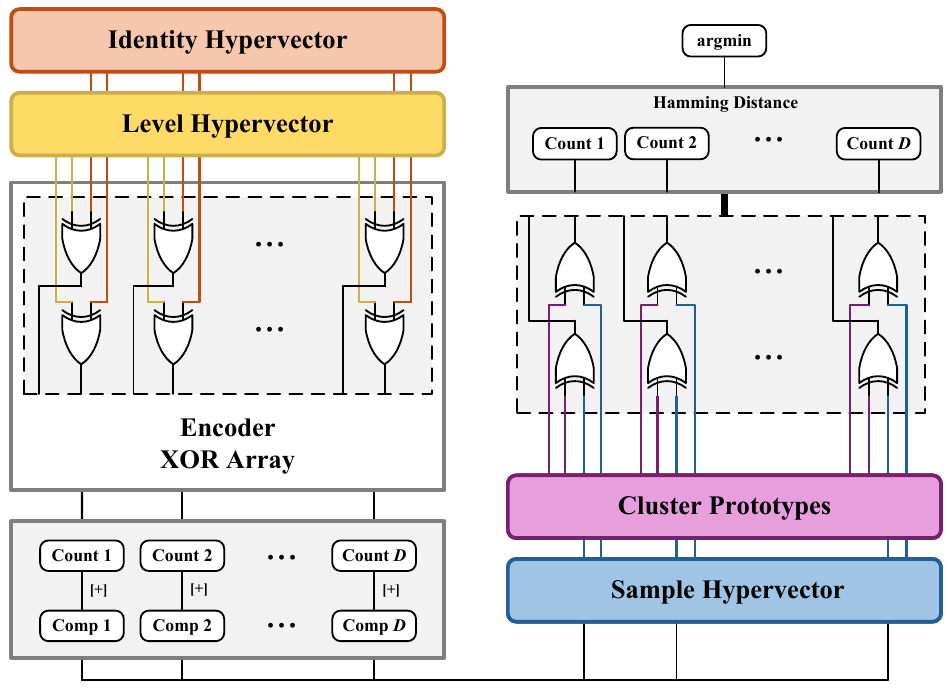}
    \caption{\label{fig:hardware} Conceptual hardware framework for HD3C.}
\end{wrapfigure}

The current implementation of HD3C is in Python, which does not fully exploit the hardware-level efficiency of binary hypervectors enabled by single-bit operations and parallel computing. Prior work on HDC hardware has shown that fundamental operations such as binding and bundling can be implemented with extremely lightweight digital logic: binding reduces to bitwise~\texttt{XOR}, bundling to majority voting circuits, and similarity search to parallelized Hamming distance computation~\cite{imani_voicehd_2017}. These properties allow HDC systems to operate with low energy consumption and high throughput compared to conventional floating-point ML models. 

Building on these insights, we outline a conceptual hardware framework specifically tailored for HD3C (Figure~\ref{fig:hardware}). To validate its feasibility, we prototyped the framework on a Xilinx FPGA platform and are able to achieve the accuracy reported in Table~\ref{tab:performance_compare}. We plan to further refine and evaluate this hardware-oriented design in future work.

\section{Conclusion}

\label{sec:conclusion}

This paper introduces HD3C, an energy-efficient classification framework extended from Hyperdimensional Computing (HDC). HD3C demonstrates significant advantage in medical data classification: on heart sound classification, it is $350\times$ more efficient than Bayesian ResNet, and provide $>10\%$ accuracy improvement over standard HDC.

We evaluated HD3C's robustness both theoretically and empirically. The model maintains high accuracy under challenging conditions: showing only a 1.39\% drop under 15\% input noise, 1.78\% when trained on just 40\% of the dataset, and 2.84\% when 20\% of its learned parameters are corrupted. These results highlight HD3C's reliability on real-world medical applications.

From a societal perspective, HD3C can help expand access to medical screening in underserved settings by enabling assessments on low-cost, GPU-free devices. However, overreliance on the screening tool without clinical oversight could lead to misdiagnoses; therefore, the system should be designed to recommend follow-up with a medical professional when abnormalities are detected. 

%
%
%

\section*{Acknowledgments}
This work was supported by the National Key R\&D Program of China (Grant No. 2021ZD0201300), the National Natural Science Foundation of China (Grant Nos. 62406118), the Postdoctoral Project of Hubei Province (Grant No. 2024HBBHCXA017), the Key Research and Development Program of Wuhan City (Grant No. 2025051202030404), and the Opening Project of Hubei Key Laboratory of Cognitive and Affective Disorder (Grant No. HBCAD2025-12).

\bibliography{references}
\bibliographystyle{splncs04}

\newpage
\appendix
\setcounter{theorem}{0}

\section{Theory}
\label{Appendix: Theory}

\subsection{Notations and Definitions}
\label{Appendix: Notations and Definitions}
\begin{definition}[Hyperspace]
\label{definition: Hyperspace}
The hyperspace $\mathcal{H}^D$ is the set of all bipolar hypervectors of dimension $D$, where each element takes a value from $\{+1, -1\}$. Formally,
\begin{equation}
\mathcal{H}^D := \{ -1, +1\}^D = \{ A \in \mathbb{R}^D \mid a_i \in \{+1, -1\},\ \forall i \in \{1, 2, \dots, D\} \}
\end{equation}
where $a_i$ denotes the $i$-th component of the hypervector $A$.

Unless otherwise specified, for a \textit{random hypervector} $A \in \mathcal{H}^D$, each component $a_i$ is drawn independently and identically distributed (i.i.d.) from a binary distribution over $\{-1, +1\}$ with equal probability:
\begin{equation}
\mathbb{P}(a_i = +1) = \mathbb{P}(a_i = -1) = 0.5, \quad \forall i \in \{1, \dots, d\}.
\end{equation}

\end{definition}

\begin{definition}[Elementary Functions]
\label{definition: Elementary Functions}
Given any hypervectors $A, B \in \mathcal{H}^D$, we define the following elementary functions:

\begin{itemize}
    \item The number of $+1$ elements in $A$:
    \begin{equation}
    \text{pos}(A) := \sum_{i=1}^{D} \mathbb{I}(a_i = +1)
    \end{equation}

    \item The number of $-1$ elements in $A$:
    \begin{equation}
    \text{neg}(A) := \sum_{i=1}^{D} \mathbb{I}(a_i = -1)
    \end{equation}

    \label{opr:binding}
    \item Binding of A and B:
    \begin{equation}
    A \otimes B := A \times B
    \end{equation}

    \label{opr:bundling}
    \item Bundling of A and B:
    \begin{equation}
    A \oplus B := [A + B]
    \end{equation}

\end{itemize}
where $\mathbb{I}(\cdot)$ denotes the indicator function that returns $1$ if the condition is true and $0$ otherwise, $\times$ is an element-wise multiplication, $+$ is an element-wise addition, and $[\cdot]$ is the element-wise majority function that outputs $-1$ for negative sum, $+1$ for positive sum, and equally random drawn from $\{-1, +1\}$ for zero sum.
\end{definition}

\begin{definition}[Hamming Distance]
\label{definition: Hamming Distance}
    Given two hypervectors $A, B \in \mathcal{H}^D$, the Hamming distance between them is defined as the ratio of positions where the corresponding components differ. Formally,
    
    \begin{equation}
    d_{H}: \mathcal{H}^D \times \mathcal{H}^D \rightarrow [0, 1], \ \ d_H(A, B) = \frac{1}{D}\sum_{i=1}^D \mathbb{I}(a_i \neq b_i) = \frac{1}{D}\text{neg}(A\otimes B).
    \end{equation}
\end{definition}

\begin{definition}[Level Set]
\label{definition: Level Set} We define a set of M hypervectors $\mathcal{L} = \{L^{(j)}\}_{j \in \{1, 2, \cdots, M\}} \subset \mathcal{H}^D$ as a level set if it can be generated by:
\begin{itemize}
    \item Randomly sample a base hypervector $L^{(1)} \in \mathcal{H}^D$
    \item Initialize an empty set $\mathcal{B} = \emptyset$ to record flipped bit positions.
    \item For each $i \in \{2, 3, \dots, M\}$:
    \begin{itemize}
        \item Randomly select $\frac{D}{M-1}$ positions from $\{1, \dots, D\} \setminus \mathcal{B}$ (i.e., bits not yet flipped). To avoid unnecessary complexity, we assume that $M-1$ divides $D$ exactly here, i.e., $\frac{D}{M-1} \in \mathbb{N}$.
        \item Flip the selected bits in $L^{(i-1)}$ to obtain $L^{(i)}$;
        \item Update $\mathcal{B}$ to include the newly flipped bit positions.
    \end{itemize}
\end{itemize}
In the constructed level set $\mathcal{L}$, the Hamming distance between any two levels $L^{(i)}$ and $L^{(j)}$ satisfies
\[
d_H(L^{(i)}, L^{(j)}) = \frac{|i-j|}{M-1}.
\]

\end{definition}

\begin{definition}[Mapping with Levels]
\label{definition: Mapping with Levels} 
Let $\mathcal{X} \subseteq \mathbb{R}^d$ be a continuous feature vector space.  
Define independently sampled $d$ random hypervectors $\mathcal{ID} = \{ID^{(1)}, ID^{(2)}, \dots, ID^{(d)}\} \subset \mathcal{H}^D$ and a random level set $\mathcal{L} = \{L^{(1)}, L^{(2)}, \dots, L^{(M)}\} \subset \mathcal{H}^D$. Next, for each feature dimension $n \in \{1, \dots, d\}$, define $M-1$ real-valued thresholds $\theta_{n, 1} < \theta_{n, 2} < \dots < \theta_{n, M-1}$, partitioning $\mathbb{R}$ into $M$ intervals:

    \[
    I_0 = (-\infty, \theta_{n, 1}),\quad I_1 = [\theta_{i, 1}, \theta_{n, 2}),\quad \dots,\quad I_{M-1} = [\theta_{n,M-1}, +\infty).
    \]
Based on the intervals defined by $\Theta = \{\theta_{n, m}\}_{n \in \{1, \dots, d\}, m \in \{1, \dots, M\}} $, we select the unique $m = l_n(x_i[n])$ for $x_i[n] \in \mathbb{R}$ that $x_i[n] \in I_m$.

In this task, the mapping function for feature $n$ is defined by

\[
l_n(x) = 
\left\{
\begin{array}{ll}
1 & x \in (-\infty, \theta_{n, \alpha}) \\
\lfloor \frac{x-\theta_{n, \alpha}}{\theta_{n, \beta} - \theta_{n, \alpha}} \times M + 1 \rfloor   & x \in [\theta_{n, \alpha}, \theta_{n, \beta}) \\
M & x \in [\theta_{n, \beta}, \infty)
\end{array}
\right.,
\]
where $\theta_{n, \alpha}$ and $\theta_{n, \beta}$ denote the 2\% and 98\% quantiles of values in feature $n$, respectively.


\end{definition}

\subsection{Lemmas and Propositions }
\label{Appendix: Useful Lemma}
\begin{lemma}[Hamming Distance Between Product and Multiplier]
\label{lemma: Hamming Distance between product and multiplier}
Given hypervectors $A, B \in \mathcal{H}^D$, the Hamming distance $d_H(A, A * B)$ depends only on $B$. Specifically, 
    \begin{equation}
    d_H(A, A \times B) = \frac{1}{D}\text{neg}(B).
    \end{equation}

\end{lemma}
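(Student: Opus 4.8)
The plan is to reduce the claim to the closed-form identity $d_H(A,B) = \frac{1}{D}\,\mathrm{neg}(A \otimes B)$ recorded in Definition~\ref{definition: Hamming Distance}, combined with the elementary observation that binding is an involution: since every component of a hypervector lies in $\{-1,+1\}$, we have $a_i^2 = 1$ for all $i$, hence $A \otimes A = \vone$, the all-ones hypervector, which is the identity element for componentwise multiplication.

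Concretely, I would first apply the Hamming-distance identity to the pair $A$ and $A \times B$:
\[
d_H(A, A \times B) = \frac{1}{D}\,\mathrm{neg}\!\big(A \otimes (A \times B)\big).
\]
Then, using commutativity and associativity of componentwise multiplication (Definition~\ref{definition: Elementary Functions}), I would regroup $A \otimes (A \times B) = (A \otimes A) \otimes B = \vone \otimes B = B$. Substituting back yields $d_H(A, A \times B) = \frac{1}{D}\,\mathrm{neg}(B)$, which is exactly the stated formula. In particular this shows the distance depends only on $B$.

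A second, even more direct route is a componentwise argument straight from Definition~\ref{definition: Hamming Distance}: because $a_i \in \{-1,+1\}$ is never zero, the equality $a_i b_i = a_i$ is equivalent to $b_i = +1$, so $\mathbb{I}(a_i \neq a_i b_i) = \mathbb{I}(b_i = -1)$ for every index $i$; summing over $i$ and dividing by $D$ gives $d_H(A, A\times B) = \frac{1}{D}\sum_{i=1}^D \mathbb{I}(b_i = -1) = \frac{1}{D}\,\mathrm{neg}(B)$. There is no genuine obstacle here; the only point requiring care is the sign bookkeeping in the involution step, namely that the entries are $\pm 1$ (so $a_i^2 = 1$) rather than $0/1$, so that binding $A$ to itself really does collapse to the multiplicative identity.
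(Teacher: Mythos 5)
Your proposal is correct, and your second (componentwise) argument is essentially identical to the paper's own proof, which likewise observes that $a_i b_i \neq a_i$ if and only if $b_i = -1$ and sums the indicators. The first route via $A \otimes A = \vone$ is just a clean algebraic repackaging of the same fact using the identity $d_H(A,B) = \frac{1}{D}\,\mathrm{neg}(A\otimes B)$ already built into Definition~\ref{definition: Hamming Distance}, so no substantive difference.
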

\begin{proof}
By the definition of Binding, the $i$-th element of $A * B$ is $a_i * b_i$, which behaves as:
\[
a_i * b_i =
\begin{cases}
a_i, & \text{if } b_i = +1, \\
-a_i, & \text{if } b_i = -1.
\end{cases}
\ \ \ 
\text{or} \ \ 
a_i \neq (a_i * b_i) \iff b_i = -1
\]

Therefore,
\[
d_H(A, A * B) = \frac{1}{D}\sum_{i=1}^D \mathbb{I}(a_i \neq (a_i * b_i)) = \frac{1}{D}\sum_{i=1}^D \mathbb{I}(b_i = -1) = \frac{1}{D}\text{neg}(B).
\]

\end{proof}

\begin{lemma}[Hamming Distance Preservation Under Multiplication]
\label{lemma: Hamming Distance Preservation under Multiplication}
Given hypervectors $A, B, C \in \mathcal{H}^D$, we have
    \begin{equation}
    d_H(A * B, B * C) = d_H(A, C).
    \end{equation}

\end{lemma}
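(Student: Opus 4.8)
The plan is to reduce the claim to the definition of Hamming distance in terms of the $\text{neg}$ operator (Definition~\ref{definition: Hamming Distance}) and then exploit the elementary algebraic properties of the binding operation over the bipolar alphabet. The whole argument is essentially a one-line computation, so the task is mainly to lay out the identities in the right order.

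First I would recall that for any $X, Y \in \mathcal{H}^D$, Definition~\ref{definition: Hamming Distance} gives $d_H(X, Y) = \frac{1}{D}\text{neg}(X \otimes Y)$. Applying this with $X = A \otimes B$ and $Y = B \otimes C$ yields $d_H(A \otimes B,\, B \otimes C) = \frac{1}{D}\text{neg}\big((A \otimes B) \otimes (B \otimes C)\big)$. The key step is then to simplify the argument of $\text{neg}$: since binding is defined (Definition~\ref{definition: Elementary Functions}) as element-wise multiplication in $\mathbb{R}$, it is commutative and associative, so $(A \otimes B) \otimes (B \otimes C) = A \otimes (B \otimes B) \otimes C$; and because each component satisfies $b_i \in \{-1,+1\}$, hence $b_i^2 = 1$, we have $B \otimes B = \vone$, the all-ones hypervector, which acts as the identity for binding. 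Therefore the argument collapses to $A \otimes C$, and a second application of Definition~\ref{definition: Hamming Distance} gives $\frac{1}{D}\text{neg}(A \otimes C) = d_H(A, C)$, as required.

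I do not expect a genuine obstacle here; the only points worth stating explicitly are that binding is commutative and associative and that $B \otimes B = \vone$, both immediate from the element-wise definition over $\{-1,+1\}$. If one prefers to bypass the $\text{neg}$ reformulation entirely, an equivalent route is to argue coordinate-by-coordinate directly from the definition of $d_H$ as the fraction of differing positions: for each $i$, $a_i b_i \neq b_i c_i \iff a_i \neq c_i$ (cancel the common factor $b_i = \pm 1$), so the sets of differing coordinates of $(A \otimes B, B \otimes C)$ and of $(A, C)$ coincide, and the two Hamming distances are literally equal. Either presentation is short and self-contained given the results already established.
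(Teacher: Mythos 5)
Your proposal is correct, and the coordinate-by-coordinate alternative you mention at the end (cancel the common factor $b_i = \pm 1$ so that $a_i b_i \neq b_i c_i \iff a_i \neq c_i$) is precisely the paper's own proof. Your primary route via $d_H(X,Y) = \frac{1}{D}\mathrm{neg}(X \otimes B)$-style rewriting and $B \otimes B = \vone$ is just an equivalent algebraic repackaging of the same cancellation, so there is nothing substantively different to compare.
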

\begin{proof}
By the definition of Binding, the $i$-th elements of $A * B$ and $B * C$ are $a_i * b_i$ and $b_i * c_i$.

Since $b_i \neq 0$, we have $a_i * b_i \neq b_i * c_i \iff a_i \neq c_i$. 
Therefore, 
\[
d_H(A * B, B * C) = \frac{1}{D}\sum_{i=1}^D \mathbb{I}(a_i * b_i \neq b_i * c_i) = \frac{1}{D}\sum_{i=1}^D \mathbb{I}(a_i \neq c_i) = d_H(A, C).
\]

\end{proof}

\begin{lemma}[Hamming Distance Between Two Random Hypervectors]
\label{lemma: Hamming distance between two Random Hypervectors}
Let $A, B \in \mathcal{H}^D$ be two random hypervectors,
Then, for any $\epsilon > 0$,
    \begin{equation}
    \lim_{D \to \infty} \mathbb{P}\left(\left|d_H(A, B) - 0.5\right| > \epsilon\right) = 0.
    \end{equation}

\end{lemma}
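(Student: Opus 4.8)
The plan is to recognize $d_H(A,B)$ as the empirical mean of $D$ independent fair coin flips and then apply a routine concentration argument. First I would fix a coordinate $i \in \{1,\dots,D\}$ and set $X_i := \mathbb{I}(a_i \neq b_i)$. Since $A$ and $B$ are random hypervectors, the pairs $(a_i,b_i)$ are independent across $i$, and within each pair all four outcomes in $\{-1,+1\}^2$ are equally likely; in particular $\mathbb{P}(a_i \neq b_i) = 1/2$. Hence each $X_i$ is Bernoulli$(1/2)$ and the family $\{X_i\}_{i=1}^D$ is i.i.d., giving $\mathbb{E}[X_i] = 1/2$ and $\Var(X_i) = 1/4$.

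Next, directly from Definition~\ref{definition: Hamming Distance} I would write $d_H(A,B) = \frac{1}{D}\sum_{i=1}^D X_i$, so that $d_H(A,B)$ is the sample average of $D$ i.i.d.\ bounded variables with mean $1/2$. By linearity of expectation $\mathbb{E}[d_H(A,B)] = 1/2$, and by independence $\Var(d_H(A,B)) = \frac{1}{D^2}\sum_{i=1}^D \Var(X_i) = \frac{1}{4D}$.

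Finally, Chebyshev's inequality yields, for every fixed $\epsilon > 0$,
\[
\mathbb{P}\big(|d_H(A,B) - 0.5| > \epsilon\big) \;\le\; \frac{\Var(d_H(A,B))}{\epsilon^2} \;=\; \frac{1}{4D\epsilon^2} \;\xrightarrow[D \to \infty]{}\; 0,
\]
which is precisely the claim. Equivalently one may cite the weak law of large numbers for $\{X_i\}$ directly, or observe that $D\,d_H(A,B) \sim \mathrm{Binomial}(D,1/2)$ and invoke a Hoeffding/Chernoff bound if an exponential (hence almost-sure) rate is desired.

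There is no genuinely hard step here: the only point requiring care is the justification that $\{X_i\}$ is i.i.d.\ Bernoulli$(1/2)$, i.e.\ verifying from the definition of a random hypervector that each coordinatewise comparison is an independent unbiased coin flip. Once that is in place, the conclusion is a standard concentration-of-measure statement.
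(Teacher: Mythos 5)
Your proof is correct and takes essentially the same route as the paper: both reduce $d_H(A,B)$ to the empirical mean of the i.i.d.\ Bernoulli$(1/2)$ indicators $X_i = \mathbb{I}(a_i \neq b_i)$ and then invoke a law-of-large-numbers argument. The only (cosmetic) difference is that you make the concentration quantitative via Chebyshev, obtaining the explicit rate $1/(4D\epsilon^2)$, whereas the paper cites the weak law of large numbers directly; both rest on the same implicit assumption that $A$ and $B$ are sampled independently of each other.
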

\begin{proof}
Define random variables $X_i := \mathbb{I}(a_i \neq b_i)$, where $X_i = 1$ if $a_i \neq b_i$ and $X_i = 0$ otherwise. By definition,
\[
d_H(A, B) = \frac{1}{D} \sum_{i=1}^D X_i.
\]

Since $A$ and $B$ are random hypervectors, their components $a_i$ and $b_i$ are drawn i.i.d. from a binary distribution over $\{-1, +1\}$ with equal probability. Thererfore, 
\[
\mathbb{P}(a_i \neq b_i) = 0.5, \quad \mathbb{P}(a_i = b_i) = 0.5,
\]
indicating that each $X_i$ is an independent Bernoulli random variable with  $\mathbb{E}[X_i] = 0.5$. Furthermore, we know that the average of i.i.d. sequence $\{X_i\}$, $\frac{1}{D} \sum_{i=1}^D X_i$ follows a scaled Binomial distribution:
\[
d_H(A, B) = \frac{1}{D} \sum_{i=1}^D X_i \sim \frac{1}{D} \text{Binomial}(D, \frac{1}{2}).
\]

Applying the Weak Law of Large Numbers to the i.i.d. sequence $\{X_i\}$, for any $\epsilon > 0$,
\[
\lim_{D \to \infty} \mathbb{P}\left( \left| \frac{1}{D} \sum_{i=1}^D X_i - 0.5 \right| > \epsilon \right) = 0.
\]

Thus,
\[
\lim_{D \to \infty} \mathbb{P}\left( \left| d_H(A, B) - 0.5 \right| > \epsilon \right) = 0.
\]

\end{proof}

\begin{lemma}[Orthogonality of Binding]
\label{lemma: }
Let $A, B \in \mathcal{H}^D$ be two random hypervectors,
Then, for any $\epsilon > 0$,
\[
\lim_{D \to \infty} \mathbb{P}\left(\left|d_H(A, A * B) - 0.5\right| > \epsilon\right) = 0.
\]

\end{lemma}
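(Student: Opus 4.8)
The plan is to reduce the statement to the behaviour of $\text{neg}(B)$ and then invoke the law of large numbers, in direct analogy with the proof of Lemma~\ref{lemma: Hamming distance between two Random Hypervectors}. Everything hinges on the elementary identity already established in Lemma~\ref{lemma: Hamming Distance between product and multiplier}.

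First I would apply Lemma~\ref{lemma: Hamming Distance between product and multiplier}, which gives the exact identity
\[
d_H(A, A * B) = \frac{1}{D}\,\text{neg}(B),
\]
valid for every realization of $A$ and $B$. In particular $d_H(A, A*B)$ is a function of $B$ alone, so the randomness in $A$ plays no role and tie-breaking subtleties cannot arise: binding is pure pointwise multiplication, with no element-wise majority $[\cdot]$ involved.

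Next, since $B$ is a random hypervector, its components $b_1,\dots,b_D$ are i.i.d.\ with $\mathbb{P}(b_i=-1)=\tfrac12$, so $\text{neg}(B)=\sum_{i=1}^{D}\mathbb{I}(b_i=-1)$ is a sum of i.i.d.\ Bernoulli$(\tfrac12)$ variables, i.e.\ $\text{neg}(B)\sim\mathrm{Binomial}(D,\tfrac12)$ with $\mathbb{E}[\mathbb{I}(b_i=-1)]=\tfrac12$. Applying the Weak Law of Large Numbers to this i.i.d.\ sequence gives, for every $\epsilon>0$,
\[
\lim_{D\to\infty}\mathbb{P}\!\left(\left|\tfrac{1}{D}\text{neg}(B)-0.5\right|>\epsilon\right)=0,
\]
and combining this with the identity above yields the claim. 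An equivalent one-line route: $A*B$ is itself uniform on $\mathcal{H}^D$ (binding by a fixed sign pattern is a bijection of $\mathcal{H}^D$ that preserves the uniform law), so $d_H(A,A*B)$ has the same distribution as the Hamming distance between two independent random hypervectors, and the result follows immediately from Lemma~\ref{lemma: Hamming distance between two Random Hypervectors}.

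I do not expect any genuine obstacle here; the statement is essentially a corollary of the two earlier lemmas, and the only care needed is in justifying that $\text{neg}(B)$ is a clean Binomial sum. If a quantitative rate were wanted rather than a bare limit, I would replace the Weak Law of Large Numbers with Hoeffding's inequality to obtain $\mathbb{P}\!\left(|d_H(A,A*B)-0.5|>\epsilon\right)\le 2e^{-2D\epsilon^{2}}$.
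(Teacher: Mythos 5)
Your proof is correct and follows exactly the paper's own route: reduce via Lemma~\ref{lemma: Hamming Distance between product and multiplier} to $d_H(A,A*B)=\tfrac{1}{D}\mathrm{neg}(B)$, observe that $\mathrm{neg}(B)$ is a sum of i.i.d.\ Bernoulli$(\tfrac12)$ variables, and apply the Weak Law of Large Numbers. The extra remarks (the bijection argument and the Hoeffding rate) are valid bonuses but not needed.
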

\begin{proof}
From the Lemma~\ref{lemma: Hamming Distance between product and multiplier}, we know that
\[
d_H(A, A * B) = \frac{1}{D} \sum_{i=1}^D \mathbb{I}(b_i = -1) = \frac{\text{neg}(B)}{D}.
\]

Since $B$ is a random hypervector, each component $b_i$ is drawn i.i.d. from a binary distribution over $\{-1, +1\}$ with equal probability. Again, by the Weak Law of Large Numbers,
\[
\lim_{D \to \infty} \mathbb{P}\left(\left|\frac{\text{neg}(B)}{D} - 0.5\right| > \epsilon\right) = 0,
\]
thus leading to
\[
\lim_{D \to \infty} \mathbb{P}\left(\left|d_H(A, A * B) - 0.5\right| > \epsilon\right) = 0.
\]

\end{proof}

\begin{proposition}[Hamming Distance with sufficiently large $D$]
\label{proposition: statistic hamming distance}
Let two hypervectors $A, B \in \mathcal{H}^D$ that satisfy
\[
    d_H(A, B) = \delta,
\]
and the random variable $Z_i$ indicates the situation of the $i^{th}$ bit
\[
    Z_i = \mathbb{I}\left( a_i \neq b_i \right).
\]

With sufficiently large $D$, the $Z_i$ can be approximately viewed as an i.i.d. Bernoulli distribution

\[
Z_i \sim_{i.i.d.} \text{Bernoulli}(\delta), \forall i \in \{1,2, \ldots,D\}.
\]

\end{proposition}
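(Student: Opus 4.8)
The plan is first to pin down the probabilistic model that the proposition leaves implicit. The hypothesis $d_H(A,B)=\delta$ fixes only the \emph{number} $k:=\delta D$ of coordinates at which $A$ and $B$ disagree (assume $\delta D\in\mathbb{N}$, otherwise round and carry an $O(1/D)$ error), not \emph{which} ones. The natural, symmetry-respecting choice — and the one implicitly used throughout the downstream arguments — is to take the disagreement pattern $(Z_1,\dots,Z_D)\in\{0,1\}^D$ uniform among the $\binom{D}{k}$ strings with exactly $k$ ones; equivalently, $\{i:Z_i=1\}$ is a uniformly random $k$-subset of $\{1,\dots,D\}$, so the $Z_i$ are the indicators obtained by sampling $k$ positions \emph{without replacement}. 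This already exposes the one real subtlety: for finite $D$ the $Z_i$ are \emph{not} independent, since they must sum to $k$, so the claim is genuinely asymptotic, and a proof must establish the $D\to\infty$ limit and check that the i.i.d.\ surrogate is the conservative choice for the upper-bound arguments of Theorems \ref{thm: robust_to_input_noise}--\ref{thm: robust_to_hardware_error}.

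Next I would record the exact finite-$D$ moments of this multivariate hypergeometric law: every marginal is $\mathbb{P}(Z_i=1)=k/D=\delta$, so each $Z_i$ is exactly $\mathrm{Bernoulli}(\delta)$; and for $i\neq j$, $\mathrm{Cov}(Z_i,Z_j)=-\delta(1-\delta)/(D-1)\to 0$. More strongly, for any fixed $m$ the joint pmf of $(Z_1,\dots,Z_m)$ converges pointwise to $\delta^{a}(1-\delta)^{m-a}$ (with $a=\sum_{i\le m}z_i$) as $D\to\infty$ with $\delta$ fixed — this is the classical ``sampling without replacement $\to$ sampling with replacement'' limit, obtained by writing the hypergeometric joint pmf as a product of factors each tending to $\delta$ or $1-\delta$ — and by Scheff\'e's lemma (the state space is finite) this convergence is in total variation to $\mathrm{Bernoulli}(\delta)^{\otimes m}$. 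This is the precise sense in which the disagreement bits ``can be viewed as i.i.d.\ $\mathrm{Bernoulli}(\delta)$.''

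For the way the proposition is actually invoked — inside Weak-Law/Chebyshev arguments on empirical averages such as $\frac1D\sum_i Z_i$ or $\frac1D\sum_i f(Z_i)$ — I would additionally note that the without-replacement variance of any partial sum satisfies $\mathrm{Var}\!\big(\sum_{i=1}^n Z_i\big)=n\delta(1-\delta)\frac{D-n}{D-1}\le n\delta(1-\delta)$, so it is dominated by the corresponding i.i.d.\ variance. Chebyshev therefore gives concentration at least as tight as in the i.i.d.\ case, and every Weak-Law-of-Large-Numbers conclusion used in the proofs of the three theorems holds verbatim with the i.i.d.\ $\mathrm{Bernoulli}(\delta)$ model in place of the true constrained one. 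Assembling these observations yields the proposition.

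The main obstacle is not any one calculation but the modelling itself: the statement treats $A,B$ as \emph{fixed} while calling the deterministic quantities $Z_i$ ``random,'' so the substance of the argument is (i) supplying the missing randomness via the uniform disagreement pattern, (ii) being explicit that independence is only a large-$D$ phenomenon because of the residual negative correlation $-\delta(1-\delta)/(D-1)$, and (iii) showing that this residual dependence is harmless — in fact favourable — for each downstream use. Phrasing the total-variation limit cleanly and stating the final claim so that it is simultaneously true and faithful to how Theorems \ref{thm: robust_to_input_noise}--\ref{thm: robust_to_hardware_error} use it is where the care lies.
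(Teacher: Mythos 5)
Your proposal is correct, and it rests on the same modelling premise as the paper's own argument: the only randomness available is the choice of which $k=\delta D$ coordinates disagree, taken uniformly over all $\binom{D}{k}$ patterns (the paper phrases this as ``conditioned on their Hamming distance being $dD$, the mismatch positions are uniformly random''). The difference is in how much is actually proved. The paper's ``proof'' consists of two bullet-point observations-cum-assumptions — a concentration inequality for $\frac1D\sum_i Z_i$ (which is vacuous here, since that sum is deterministically $\delta$ under the stated hypothesis) and the uniformity assumption itself — and then simply declares the i.i.d.\ approximation justified. You supply the missing mathematical content: the exact $\mathrm{Bernoulli}(\delta)$ marginals, the negative pairwise covariance $-\delta(1-\delta)/(D-1)$ quantifying the residual dependence, the total-variation convergence of every fixed-dimensional marginal to the product law (the without-replacement $\to$ with-replacement limit plus Scheff\'e), and — most usefully for the downstream theorems — the finite-population variance bound $\mathrm{Var}\bigl(\sum_{i=1}^n Z_i\bigr)=n\delta(1-\delta)\frac{D-n}{D-1}\le n\delta(1-\delta)$ showing the i.i.d.\ surrogate is conservative for every Chebyshev/WLLN step in Theorems \ref{thm: robust_to_input_noise}--\ref{thm: robust_to_hardware_error}. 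Your version is the proof the proposition actually needs; the paper's is closer to a restatement of the hypothesis.
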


\begin{proof}
With the definition of Hamming distance, we have
\[
\frac{1}{D} \sum_{i=1}^D Z_i = d_H(A, B) = \delta.
\]

Observe that this is an empirical mean of $\{Z_i\}$ over $D$ bits. Notice that the $Z_i$ are NOT independent — their sum is fixed to be exactly $\delta D$. However, we make the following observations and assumptions to justify such a approximation:
\begin{itemize}
    \item With sufficiently large D, the possibility of the Hamming distance of A and B completely falls into an arbitrarily small interval around the $\delta$ is almost 1
    \[
    \mathbb{P}\left( \left| \frac{1}{D} \sum_{i=1}^D Z_i - d \right| > \epsilon \right) \leq 2 \exp(-2 D \epsilon^2).
    \]

    \item If $A$ and $B$ are generated randomly and conditioned on their Hamming distance being $dD$, then the mismatch positions mentioned above are uniformly random for every index;
\end{itemize}

\end{proof}

To summarize, this proposition provides a new perspective to define the Hamming distance in the large hyperspace. With sufficiently large dimensionality $D$, Hamming distance between hypervectors can be interpreted statistically as the empirical mean of i.i.d. Bernoulli random variables, yielding a probabilistic characterization of similarity.

\subsection{Theorems}
\label{Appendix: Theorems}

\begin{theorem}[Robustness to Input Noise]
\label{prf:robust_to_input_noise}
Let $s^{(1)}$ be a feature vector and $s^{(2)}$ its noisy variant, with their Sample-HVs denoted as $S^{(1)}, S^{(2)} \in \{-1, 1\}^D$ according to Equation~\ref{eqn: mapping}.
Suppose that for feature dimensions $n \in \{1, \dots, d\}$,
\begin{equation}
\frac{\left| s_n^{(1)} - s_n^{(2)} \right|}{\Delta_n} \leq \delta,
\end{equation}
where $\Delta_n$ is the range of the $n^{\mathrm{th}}$ feature value, and $\delta \in [0, 1]$ denotes the maximum normalized perturbation.
Then, with a sufficiently large $D$, the expected upper-bound of the Hamming distance between $S^{(1)}$ and $S^{(2)}$ converges to a monotonically increasing function $g(\delta)$:
\begin{equation}
    \lim_{D \to \infty} \mathbb{E}\left[\sup d_H\left( S^{(1)}, S^{(2)} \right) \right] = g(\delta).
\end{equation}
\end{theorem}

\begin{proof}

We consider the $L^{(l_n(s_n^{(1)})}, L^{(l_n(s_n^{(2)})}$ first.
Since $\frac{\left| s^{(1)}_n - s^{(2)}_n \right|}{\Delta_n} \leq \delta$, we can bound the difference of their corresponding Level Hypervectors. By definition~\ref{definition: Level Set}, we have

\begin{equation}
    \begin{aligned}
\quad &d_H\left( L^{(l_n(s^{(1)}_n))}, L^{(l_n(s^{(2)}_n))} \right) \\
&= \frac{l_n(s^{(1)}_n) - l_n(s^{(2)}_n)}{M-1} \\
&\leq \frac{1}{M-1} \left( \frac{\delta \Delta_n}{\theta_{i, M-1} - \theta_{i, 1}}(M-2) + 2\right) \\
&\leq \frac{M}{M-1}\delta + \frac{2}{M-1} = \delta_d,
    \end{aligned}
\end{equation}

With the random flipping performed by $\otimes ID^{(i)}$, $\{L^{(l_n(s^{(1/2)}_n))} \otimes ID^{(i)}\}$ can be viewed as a new set of randomly generated hypervectors, denoted as $\{ T^{i(1/2)} \}$, which satisfy the following relationship with Lemma~\ref{lemma: Hamming Distance Preservation under Multiplication}
\begin{equation}
\begin{aligned}
&\quad d_H\left( T^{i(1)}, T^{i(2)} \right) \\
&= d_H\left( L^{(l_n(s^{(1)}_n))} \otimes ID^{(i)}, L^{(l_n(s^{(2)}_n))}  \otimes ID^{(i)} \right)  \\
& = d_H\left( L^{(l_n(s^{(1)}_n))}, L^{(l_n(s^{(2)}_n))} \right) \leq \delta_d.
\end{aligned}
\end{equation}

To estimate the upper bound, we treat the inequality as an equality. Furthermore, with Proposition~\ref{proposition: statistic hamming distance}, we consider the Hamming distance constraint as a statistically condition, so that for every $T^{i(1)}$ and $T^{i(2)}$ pair, the equality situation at the $j^{th}$ index follows a Bernoulli distribution
\[
\mathbb{I} \left( T_j^{i(1)} \neq T_j^{i(2)} \right) \sim_{i.i.d.} \text{Bernoulli}(\delta_d), \forall j \in \{1,2, \ldots,D\}.
\]

Next, we consider the situation on the $j^{th}$ index of $S^{(1)} = \bigoplus_{i=1}^{d} T^{i(1)}$ and $S^{(2)} = \bigoplus_{i=1}^{d} T^{i(2)}$ as a random variable $Z_j$ that satisfies

\[
Z_j = \mathbb{I}\left(S^{(1)}_{(j)} \neq S^{(2)}_{(j)}\right)
\]

Let $p_j^{(1/2)}$ be the number of $+1$ in the $\{T_j^{i(1/2)}\}_{i=\{1, 2, \ldots, d\}}$, so that the probability of $Z_j = 1$ is

\[
\quad P(Z_j=1) = \sum_{n=0}^{d} P(Z_j = 1 | p_j^{(1)} = n) \times P(p_j^{(1)}=n).
\]

Specifically, by considering the $\{T_j^{i(1)}\}_{i=\{1, 2, \ldots, d\}}$ follow the i.i.d. \text{Bernoulli}(0.5), we have

\[
P(p_j^{(1)}=n) = \frac{C_d^n}{2^d}.
\]
Consider the situation where $n < \frac{d}{2}$ and $d$ is an odd number to avoid unnecessary complexity, we have the number of $+1$ in the second sample follow a combined Binomial distribution

\[
p_j^{(2)}|_{p_j^{(1)}=n} \sim \text{Binomial}\left(n, 1-\delta_d \right) + \text{Binomial}\left(d-n, \delta_d \right),
\]
Therefore, 

\[
P(Z_j = 1 | p_j^{(1)} = n) = 
\sum_{k =  \frac{d+1}{2}}^{d} \sum_{i = \max(0,\, k - d + n)}^{\min(n,\, k)} 
C_n^i  \cdot 
C_{d - n}^{k - i} \cdot  \delta^{n + k - 2i}(1 - \delta)^{d - n - k + 2i}.
\]

For situation that $n > \frac{d}{2}$, we obtain an entirely analogous result
\[
P(Z_j = 1 | p_j^{(1)} = n) = 
\sum_{k =  0}^{\frac{d-1}{2}} \sum_{i = \max(0,\, k - d + n)}^{\min(n,\, k)} 
C_n^i  \cdot 
C_{d - n}^{k - i} \cdot  \delta^{n + k - 2i}(1 - \delta)^{d - n - k + 2i}.
\]

With the derived results, we can calculate the probability of $P(Z_j = 1)$ as a function of $\delta_d$. 
\begin{equation}
\label{eqn: zj=1 prob}
g(\delta) = p(\delta_d) = p(\frac{M}{M-1}\delta + \frac{2}{M-1})
\end{equation}


Finally, we consider the average with all the indices make the distribution arbitrarily close to the expectation, which is the probability we just calculated in Eqn.~\ref{eqn: zj=1 prob}, thus complete the proof.

\end{proof}

This theorem establishes an approximate form of "continuity", showing that the designed mapping ensures that when the input noise is small, the distance between the corresponding mapped outputs is also bounded. In particular, this design exhibits better rejection properties against small perturbations with sufficiently large dimensionality $D$ and relatively large features $d$.


\begin{theorem}[Distance Between Cluster Prototype and Constituents]
\label{prf:proximity_guarantee}
Let $S^{(1)}, S^{(2)}, \dots, S^{(N)} \in \mathcal{H}^D$ be independently sampled random hypervectors. Define their sum as $C = [S^{(1)} + S^{(2)} + \dots + S^{(N)}]$. 
As $D \to \infty$, for any random hypervector $S^* \in \mathcal{H}^D$, index $j \in \{1, \dots, N\}$, the Hamming distance between $C$ and any component $S^{(n)}$ satisfies 
\[
P\left(d_H(C, S^{(n)}) < d_H(C, S^*)\right) \rightarrow 1.
\]

\end{theorem}

\begin{proof}
We begin with analyzing the probability that the $i^{th}$ index of $C$ and $S^{(n)}$ is different.

Let $Z_i^{(n)} = \sum_{k \neq n} S_i^{(k)}$, where $S_i^{(k)}$ is the $i^{th}$ value of $S^{(k)}$, so that:
\[
C_i = \left[ \sum_{k=1}^N S_i^{(k)} \right] = \left[  S_i^{(n)} + \sum_{k \neq n} S_i^{(k)} \right] = \left[ S_i^{(n)} + Z_i^{(n)} \right].
\]

Since the random hypervectors are independently sampled, the $N-1$ terms $S_i^{(k)}$ ($k \neq n$) are independent random variables satisfying:
\[
\mathbb{P}(S_i^{(k)} = +1) = \mathbb{P}(S_i^{(k)} = -1) = 0.5.
\]

Since $\frac{S_i^{(k)} + 1}{2} \sim \text{Bernoulli}(0.5)$, their i.i.d. sum $Z_i^{(n)}$ follows a shifted binomial distribution:
\[
Z_i^{(n)} \sim 2 * \text{Binomial}(N-1, 0.5) - (N-1)
\]

Given $S_i^{(n)} = +1$, we have $C_i = [1 + Z_i^{(n)}]$. Without losing generality, we only consider the situation when $N-1$ is an even number, which lead to
\[
P(C_i \neq S_i^{(k)}) = P( [1 + Z_i^{(n)}] \neq 1) = P(Z_i^{(n)} < -1) + \frac{1}{2}P(Z_i^{(n)} = -1) = P(Z_i^{(n)} < -1),
\]
where $P(Z_i^{(n)} = -1) = 0$ since the sum can only take even values.

With the distribution of $Z_i^{(n)}$, we can further calculate the specific probability:
\[
P(C_i \neq S_i^{(n)}) = \frac{1}{2} -  2^{-N} C_{N-1}^{\frac{N-1}{2}}.
\]

By symmetry, the same calculation applies if $S_i^{(n)} = -1$.
Notice that $\mathbb{I}(C_i \neq S_i^{(n)})$ presents a Bernoulli distribution and this process is independently carried out for every index, we can derive the distribution of the Hamming distance $d_H(C, S^{(n)})$:
\[
d_H(C, S^{(n)}) = \frac{1}{D}\sum_{i=1}^D \mathbb{I}(C_i \neq S_i^{(n)}) \sim \frac{1}{D} \text{Binomial}(D, \frac{1}{2} - p(N))
\]
where $p(N) = 2^{-N} C_{N-1}^{\frac{N-1}{2}}$.

Due to independently sampling, we can view $S^*$ and $B$ just as two random hypervectors in the hyperspace, so we can apply the result from Lemma~\ref{lemma: Hamming distance between two Random Hypervectors}, we know that

\[
d_H(C, S^*)  \sim \frac{1}{D} \text{Binomial}(D, \frac{1}{2}).
\]

With central limit theorem, these two distributions converge to normal distributions as $D \rightarrow \infty$

\[
d_H(C, S^*)  \sim \mathcal{N}(\frac{1}{2}, \frac{1}{4}D^{-1}), \ \ \ d_H(C, S^{(n)}\sim \mathcal{N}(\frac{1}{2}-p(N), (\frac{1}{4}-p^2(N))D^{-1}).
\]

Based on that, we have,
\begin{equation}
\begin{aligned}
       & \quad P\left(d_H(C, S^{(n)}) - d_H(C, S^*) \geq 0\right) \\
       & = P\left((d_H(C, S^{(n)}) -\frac{1}{2} - \frac{p(N)}{2}) - (d_H(C, S) -\frac{1}{2} - \frac{p(N)}{2}) \geq 0\right) \\
       & \leq P\left((d_H(C, S^{(n)}) -\frac{1}{2} - \frac{p(N)}{2}) \geq 0 \right) + P\left((d_H(C, S) -\frac{1}{2} - \frac{p(N)}{2}) \leq 0\right) \\
       & \leq P\left( \left| d_H(C, S^{(n)}) -\frac{1}{2} - p(N) \right| \geq \frac{p(N)}{2})  \right) + P\left(\left| d_H(C, S) -\frac{1}{2} \right| \geq \frac{p(N)}{2} \right) \\
       & = 2 * \Phi(-p(N)D^{\frac{1}{2}}) + 2 * \Phi(-\frac{p(N)}{\sqrt{1-4p^2(N)}}D^{\frac{1}{2}})
\end{aligned}
\end{equation}
we can select appropriate $D = D(p(N))$ to allow the right-hand side of the equation to approach zero arbitrarily by letting $p(N)D^{\frac{1}{2}} \rightarrow \infty$. Thus we have the result
\[
P\left(d_H(C, S^{(n)}) < d_H(C, S^*)\right) \rightarrow 1,
\]
where $\Phi$ is the Cumulative Distribution Function of the standard normal distribution.

Moreover, we further know that the convergence is roughly characterized $\Phi(-p(N)D^{\frac{1}{2}})$.

Notice that $p(N) \rightarrow 0$ as $N \rightarrow \infty$, and the dimension $D$ is required to be dependent on $p(N)$ to make the scale become effective, we apply $D \gg N$ for the implementation. Intuitively speaking, we require a sufficiently large $N$ to divide this two normal peak apart given a small $p(N)$.

\end{proof}

\begin{theorem}[Robustness to Hardware Error]
\label{prf:robust_to_hardware_error}
Assume we have a sample hypervector $S$ and two cluster hypervectors $C_1$ and $C_2$, whose initial Hamming distances satisfy:
\[
d_H(S, C_1) - d_H(S, C_2) = \epsilon > 0.
\]

We randomly flip a proportion $p$ ($p < 0.5$) of the bits in both $C_1$ and $C_2$, yielding two new hypervectors $C'_1$ and $C'_2$. As $D \rightarrow \infty$, we have

\[
P\big( d_H(S, C'_1) < d_H(S, C'_2) \big) \to 1.
\]
\end{theorem}

\begin{proof}
We begin with the situation at the $i^{th}$ index. Let $X_i$ and $Y_i$ be the indicator variables denoting whether the $i$-th bit differs from $S$ after corruption:
\[
X_i = \mathbf{1}_{\{(C_1')_i \neq S_i\}}, \qquad Y_i = \mathbf{1}_{\{(C_2')_i \neq S_i\}},
\]
so that the post-corruption Hamming distances are:
\[
d_H(S, C_1') = \frac{1}{D} \sum_{i=1}^D X_i, \qquad
d_H(S, C_2') = \frac{1}{D} \sum_{i=1}^D Y_i.
\]

Next, we analyze the expectation of each $X_i$ and $Y_i$. With the view of Proposition~\ref{proposition: statistic hamming distance}, with a sufficiently large $D$, we can transfer the Hamming distance $d$ as a random event with probability $d$:
\begin{itemize}
    \item For each $i$, if $C_1$ and $S$ originally differ at bit $i$, which happens with probability $d_1$, then flipping that bit with probability $p$ yields:
    \[
    \mathbb{P}(X_i = 1 \mid \text{originally different}) \rightarrow 1 - p,
    \qquad \mathbb{P}(X_i = 0 \mid \text{originally different}) \rightarrow p.
    \]
    \item If they originally agree (probability $1 - d_1$), then:
    \[
    \mathbb{P}(X_i = 1 \mid \text{originally same}) \rightarrow p,
    \qquad \mathbb{P}(X_i = 0 \mid \text{originally same}) \rightarrow 1 - p.
    \]
\end{itemize}

Hence, let $d_1 = d_H(S, C_1)$ and $d_2 = d_H(S, C_2)$, the expectation becomes:
\[
\mathbb{E}[X_i] \rightarrow d_1 (1 - p) + (1 - d_1) p = d_1 (1 - 2p) + p.
\]
Similarly,
\[
\mathbb{E}[Y_i] = d_2 (1 - 2p) + p.
\]

Since $d_2 = d_1 + \epsilon$ and $1 - 2p > 0$, we have:
\[
\mathbb{E}[Y_i] - \mathbb{E}[X_i] = (d_2 - d_1)(1 - 2p) = \epsilon (1 - 2p) > 0.
\]

Now define the total difference in post-corruption Hamming distances:
\[
Z = \sum_{i=1}^D (Y_i - X_i).
\]
Then:
\[
\mathbb{E}[Z] = D \cdot (\mathbb{E}[Y_i] - \mathbb{E}[X_i]) = D \cdot \epsilon (1 - 2p).
\]

Because $X_i, Y_i$ are bounded, independent random variables, and the variance of each term is bounded, we can apply Hoeffding's inequality to show that:
\[
\mathbb{P}\left( Z < 0 \right) \leq \exp(-cD)
\]
for some constant $c > 0$. This implies:
\[
\mathbb{P}(d_H(S, C_1') < d_H(S, C_2')) = \mathbb{P}(Z > 0) \to 1 \quad \text{as } D \to \infty.
\]

\end{proof}

\end{document}